\documentclass[11pt]{article}
\usepackage{fullpage}
\usepackage{amsmath,amsthm,amsfonts,amssymb,dsfont}
\usepackage{microtype}
\usepackage{hyperref,cleveref}

\Crefformat{equation}{#2(#1)#3}
\crefformat{equation}{#2(#1)#3}
\Crefrangeformat{equation}{#3(#1)#4--#5(#2)#6}
\crefrangeformat{equation}{#3(#1)#4--#5(#2)#6}
\Crefmultiformat{equation}{#2(#1)#3}{ and~#2(#1)#3}{, #2(#1)#3}{ and~#2(#1)#3}
\crefmultiformat{equation}{#2(#1)#3}{ and~#2(#1)#3}{, #2(#1)#3}{ and~#2(#1)#3}

\usepackage{enumitem}
\setlist[description]{font=\normalfont}

\usepackage{csquotes}
\MakeOuterQuote{"}

\usepackage{mathtools}
\mathtoolsset{centercolon}
\DeclarePairedDelimiterXPP\ind[1]{\mathds{1}}{\lbrace}{\rbrace}{}{#1} %
\DeclarePairedDelimiterX\eval[1]{\lbrace}{\rvert}{#1 \delimsize\rbrace} %
\DeclarePairedDelimiter\card{\lvert}{\rvert} %
\DeclarePairedDelimiter\del{\lparen}{\rparen} %
\DeclarePairedDelimiter\sbr{\lbrack}{\rbrack} %
\DeclarePairedDelimiter\set{\lbrace}{\rbrace} %

\providecommand\given{}
\newcommand{\pipeseparator}{\nonscript\:\delimsize\vert\nonscript\:\mathopen{}}
\begingroup
  \catcode`\|=13
  \gdef|{\pipeseparator}
\endgroup
\newcommand{\activatepipe}{%
  \renewcommand\given\pipeseparator
  \mathcode`\|="8000
}

\DeclarePairedDelimiterX{\Set}[1]{\{}{\}}{%
  \activatepipe
  #1
}
\DeclarePairedDelimiterX\Sbr[1]\lbrack\rbrack{%
  \activatepipe
  #1
}
\DeclarePairedDelimiterX\Del[1]\lparen\rparen{%
  \activatepipe
  #1
}
\DeclarePairedDelimiterX\Braket[1]\langle\rangle{%
  \activatepipe
  #1
}

\newtheorem{theorem}{Theorem} %
\newtheorem{lemma}{Lemma} %
\newtheorem{proposition}{Proposition} %

\DeclareMathOperator\E{\mathbb{E}} %
\DeclareMathOperator\supp{supp} %

\def\ddefloop#1{\ifx\ddefloop#1\else\ddef{#1}\expandafter\ddefloop\fi}
\def\ddef#1{\expandafter\def\csname cal#1\endcsname{\ensuremath{\mathcal{#1}}}}
\ddefloop ABCDEFGHIJKLMNOPQRSTUVWXYZ\ddefloop

\usepackage[mathscr]{euscript}
\newcommand\bits{\set{0,1}}
\newcommand\concepts{\calC}
\newcommand\hypotheses{\calH}
\newcommand\groups{\mathscr{S}}
\newcommand\family{\calF}
\newcommand\domain{\calX}
\newcommand\dcc{\ensuremath{d_{\concepts}}}
\newcommand\dcs{\ensuremath{d_{\concepts|_S}}}
\newcommand\dhyp{\ensuremath{d_{\hypotheses}}}
\newcommand\dhyps{\ensuremath{d_{\hypotheses|_S}}}
\newcommand\opt{\ensuremath{\operatorname{opt}}}
\newcommand\outdeg{\ensuremath{\operatorname{outdeg}}}
\newcommand\F{\ensuremath{\mathbb{F}}}
\newcommand\kpairs{\ensuremath{k_2}}

\usepackage[round]{natbib}
\usepackage[export]{adjustbox}

\title{The price of multi-group transductive learning}
\author{Noah Bergam \\ \textsl{Columbia University} \and Samuel Deng \\ \textsl{
Columbia University} \and Daniel Hsu \\ \textsl{Columbia University}}

\begin{document}

\maketitle

\begin{abstract}
  We show every multi-group learner in the transductive setting may incur a multiplicative penalty in its error rate on some group relative to the error rate achievable in the single-group setting, and the penalty can increasing linearly with the number of groups, up to roughly the square-root of the sample size. This stands in stark contrast to optimal multi-group learners in an analogous (group-realizable) statistical setting, where the penalty is always at most logarithmic in the sample size and independent of the number of groups.
\end{abstract}

\section{Introduction}

The \emph{multi-group} extensions of statistical and online learning frameworks address the need to consider performance of predictors on subpopulations (a.k.a.~\emph{groups} or \emph{subgroups})~\citep{blum2019advancing,rothblum2021multi,tosh2022simple}.
In these settings, multi-group learners can achieve nearly the same performance guarantees (e.g., excess risk, regret) for all groups (after suitable rescaling based on the group size) as compared to what is achievable on individual groups.
These performance guarantees hold even if (1) group-specific optimal predictors disagree where the groups intersect, and (2) the number of groups is very large or even infinite.
The statistical cost is relatively mild and typically amortized by the per-group excess risk or regret whenever the "complexity" of the family of groups is dominated by that of the performance benchmark class.
For example, in the multi-group agnostic learning scenario~\citep{rothblum2021multi,tosh2022simple}, a multi-group learner can produce a predictor $f$ such that, for all groups $S \in \groups$,
\begin{equation}
  \text{risk of $f$ on $S$}
  \, \leq \,
  \min_{h \in \hypotheses} \, \text{risk of $h$ on $S$}
  \, + \,
  O\del*{ \sqrt{ \frac{\text{complexity of $\hypotheses$ on $S$} + \text{complexity of $\groups$}}{\text{training sample size in $S$}} } }
  ,
  \label{eq:multigroup_agnostic}
\end{equation}
where $\hypotheses$ is the benchmark class and $\groups$ is the family of groups.
And in some other settings, there is no dependence on $\groups$ whatsoever in the bound.
Indeed, the difficulty of multi-group learning appears to primarily be algorithmic: for example, the classical principle of Empirical Risk Minimization over the benchmark class is not always sufficient for multi-group learning; instead successful multi-group learners typically use ensemble methods or other forms of "improper learning".

In this paper, we initiate the study of the multi-group extension of another standard learning setting---\emph{transductive learning}~\citep{vapnik1974theory,haussler1994predicting}---and we find that, there, all multi-group learners must incur a high statistical cost compared to what is achievable in the usual (single-group) scenario.
Specifically, we show (in \Cref{sec:lb}) that the best achievable per-group error rate (i.e., the smallest error rate achievable when the group is considered in isolation) may be \emph{multiplied} in the multi-group scenario (at least for one of the groups) by a factor that increases linearly with the number of groups $\card{\groups}$ (up to $\card{\groups} \asymp \sqrt{n}$, where $n$ is the sample size).
This stands in contrast to the effect of multiple groups in an analogous statistical learning setting, where the cost is at most factor logarithmic in the sample size, \emph{independent} of $\card{\groups}$.
We complement the lower bound with methods for multi-group transductive learning that ensure this multiplicative factor is at most $\min\set{\card{\groups},2\sqrt{n}}$ (given in \Cref{sec:ub}).

\section{Background on statistical and transductive learning} \label{sec:preliminaries}

We introduce transductive learning in both the (standard) single-group and the multi-group settings.
For context, we also introduce single- and multi-group statistical learning, and discuss the achievable excess error rates in these settings as established in prior works.
Additional related works are discussed in \Cref{sec:related}.

\subsection{Realizable (single-group) transductive and statistical learning}
\label{sec:singlegroup}

Let $\domain$ be a data domain, $\concepts \subseteq \bits^{\domain}$ be a concept class, and $U = (x_1,\dotsc,x_n) \in \domain^n$ be $n$ (unlabeled) points from the domain.
The (realizable) \emph{transductive learning} scenario of~\citet{haussler1994predicting} for binary classification proceeds as follows \citep[see also][]{asilis2024regularization}:
\begin{enumerate}
  \item Nature chooses a target concept $c^* \in \concepts$.
  \item An index $i \in [n] \coloneqq \set{1,\dotsc,n}$ is chosen uniformly at random.
  \item The learner receives the unlabeled points $U$, the index $i$, and the labels $y_j \coloneqq c^*(x_j)$ for all $j \neq i$.

    (We may regard $(x_j,y_j)_{j \neq i}$ as the training data, and $x_i$ as the test point.)
  \item The learner makes a (possibly randomized) prediction $\hat y_i \in \bits$ of $y_i \coloneqq c^*(x_i)$.
\end{enumerate}
The \emph{transductive error rate} of the learner is the probability that the learner's prediction differs from the label assigned to the test point by the target concept, where the probability considers the random choice of $i$ as well as any randomness used by the learner in forming its prediction:
\begin{equation*}
  \Pr\sbr*{ \hat y_i \neq y_i }
  = \frac1n \sum_{i'=1}^n \Pr\Sbr*{ \hat y_i \neq y_i \given i = i' } .
\end{equation*}
Here, $\hat y_{i'}$ is the learner's prediction when $(x_j,y_j)_{j \neq i'}$ is the training data and $x_{i'}$ is the test point.

In transductive learning, we make no distributional assumptions about the data points $U$ (in contrast to statistical learning, where $U$ is typically assumed to be an i.i.d.~sample), and instead "training" and "test" points are treated as exchangeable (in contrast to online learning, where test points always come sequentially after training points).
Algorithms for transductive learning (and online learning) may be converted into algorithms for statistical learning~\citep{haussler1994predicting,aden2023optimal,asilis2024regularization}, but the reverse is not always true.

When $U$ is assumed to be drawn i.i.d.~from a probability distribution $P$ over $\domain$, we recover the (realizable) statistical learning scenario of~\citet{valiant1984theory}.
In this case, we can always assume $i = n$; the \emph{statistical error rate} of the learner is the probability (over just the random draw of $x_n$ and any internal randomness of the learner) that $\hat y_n \neq y_n$.
(Because the statistical error rate also depends on $x_1,\dotsc,x_{n-1}$, we are typically interested in understanding how large this quantity can be either in expectation or with high probability over the choice of $x_1,\dotsc,x_{n-1}$; in this paper, we ignore this issue but refer the interested reader to works of~\citet{aden2023one,aden2023optimal} for further discussion.)

\paragraph{Achievable transductive and statistical error rates.}

In both the realizable transductive and statistical learning settings, the best achievable error rate is
\begin{equation}
  \Pr\sbr*{ \hat y_i \neq y_i } = O\del*{ \frac{\dcc}{n} }
  \label{eq:single}
\end{equation}
where $\dcc$ is the VC dimension of $\concepts$~\citep{haussler1994predicting,blumer1989learnability,ehrenfeucht1989general,hanneke2016optimal}.
In fact, the learner based on "Empirical Risk Minimization" that predicts using any $\hat c \in \concepts$ consistent with $(x_1,y_1),\dotsc,(x_{n-1},y_{n-1})$ achieves statistical error rate
\begin{equation}
  \Pr\sbr*{ \hat y_i \neq y_i } = O\del*{\frac{\dcc \log(n)}{n}}
  \label{eq:single_erm}
\end{equation}
with high probability which is optimal up to the $\log(n)$ factor.
(Without the realizability assumption, then we can obtain excess error rates of $O(\sqrt{\dhyp / n})$ relative to the best benchmark hypothesis in a hypothesis class $\hypotheses \subseteq \bits^{\domain}$.)

\subsection{Multi-group transductive and statistical learning}
\label{sec:multigroup}

The natural multi-group generalization of transductive learning additionally considers a \emph{family of groups} $\groups \subseteq 2^{\domain}$ (known to the learner), and for each group $S \in \groups$, we define the \emph{transductive error rate of the learner on group $S$} to be
\begin{equation}
  \label{eq:transductive_err_multi}
  \Pr\sbr*{ x_i \in S \wedge \hat y_i \neq y_i }
  = \frac1n \sum_{i'=1}^n \Pr\Sbr*{ x_i \in S \wedge \hat y_i \neq y_i \given i = i' } .
\end{equation}
Just as in the single-group setting, the learner is prepared to make a prediction $\hat y_{i'}$ (based on $(x_j,y_j)_{j \neq i'}$) for all $i' \in [n]$; it is only the performance measure on a specific group $S$ that restricts attention to those $i'$ with $x_{i'} \in S$.

When $U$ is assumed to be drawn i.i.d.~from a probability distribution $P$ over $\domain$, we recover the multi-group statistical learning scenario~\citep{rothblum2021multi,tosh2022simple}.
Again, we can always assume $i = n$, so the \emph{statistical error rate of the learner on a group $S \subseteq \domain$} is
\begin{equation*}
  \Pr\sbr*{ x_n \in S \wedge \hat y_n \neq y_n }
  = \Pr\sbr*{ x_n \in S } \Pr\Sbr*{ \hat y_n \neq c^*(x_n) \given x_n \in S }
  .
\end{equation*}

\paragraph{Achievable multi-group statistical error rates.}

\citet{tosh2022simple} introduced the \emph{group realizability} assumption on $\concepts$ with respect to a family of groups $\groups$ and a \emph{hypothesis class} $\hypotheses \subseteq \bits^{\domain}$: it requires that for each concept $c \in \concepts$ and each group $S \in \groups$, there exists some hypothesis $h_{c,S} \in \hypotheses$ such that $c$ and $h_{c,S}$ agree on $\supp(P) \cap S$ (i.e., all $x$ in both the support of $P$ and $S$).
Note that this implies that the restriction $\concepts|_S$ of $\concepts$ to $S$ is a subset of the restriction $\hypotheses|_S$ of $\hypotheses$ to $S$.

When $\concepts$ is group-realizable with respect to finite $\groups$ and $\hypotheses$, a multi-group learning algorithm of \citet{tosh2022simple} provides the following guarantee, with high probability:
\begin{equation}
  \Pr\sbr*{ x_n \in S \wedge \hat y_n \neq y_n }
  \leq O\del*{ \frac{\log \card{\hypotheses} + \log \card{\groups}}{n} }
  \quad \text{for all $S \in \groups$}
  .
  \label{eq:multigroup_pac}
\end{equation}
(If both sides are divided by $\Pr\sbr{ x_n \in S }$, then the left-hand side is a conditional error rate $\Pr\Sbr{ \hat y_n \neq y_n \given x_n \in S }$, and the denominator on the right-hand side is $\Pr\sbr{ x_n \in S} \cdot n$, the expected number of data points in $U$ that come from group $S$.)
In fact, by directly combining their multi-group learning algorithm with the optimal (single-group) statistical learning algorithm of~\citet{hanneke2016optimal}, the $\log\card{\hypotheses}$ in \Cref{eq:multigroup_pac} can be replaced by $\dcs$, the VC dimension of $\concepts$ restricted to $S$ (i.e., the cardinality of the largest subset of $S$ that is shattered by $\concepts$): with high probability,
\begin{equation}
  \Pr\sbr*{ x_n \in S \wedge \hat y_n \neq y_n }
  \leq O\del*{ \frac{\dcs + \log \card{\groups}}{n} }
  \quad \text{for all $S \in \groups$}
  .
  \label{eq:multigroup_pac2}
\end{equation}
(Note that we have $\dcs \leq \min\set{\dcc, \dhyps} \leq \min\set{\dcc,\dhyp}$; it is also worth mentioning that there are simple examples of $\concepts$ and $\groups$ where $\max_{S \in \groups} \dcs = 1$ but $\dcc = \infty$~\citep{ardeshir2026realizable}.)
Therefore, when $\log\card{\groups} = O(\dcs)$, the statistical error rate on group $S$ is the same (up to constants) as the best achievable error rate in the single-group scenario, and these guarantees hold for all groups in $\groups$ simultaneously.
(Without group-realizability, we obtain excess error rates of the form shown in \Cref{eq:multigroup_agnostic}; see \citealp{tosh2022simple} for more discussion.)

Multi-group statistical learning also permits a different kind of guarantee that allows for $\groups$-independent error rate bounds, as implied by the results of \citet{ardeshir2026realizable}.
To define this guarantee, we first note that the "simultaneous" guarantee in \Cref{eq:multigroup_pac2} can be interpreted as ensuring a bound on the statistical error rate on a group $S$ chosen by an adversary in a manner that may depend on the training data $(x_1,y_1),\dotsc,(x_{n-1},y_{n-1})$.
Alternatively, we may allow for a guarantee against a weaker adversary that is "oblivious" to the training data: the adversary chooses the group $S$ after the learner has committed to a learning algorithm, but before the training data is drawn.
An algorithm of \citeauthor{ardeshir2026realizable} achieves this form of guarantee:
for each $S \in \groups$, with high probability,
\begin{equation}
  \Pr\sbr*{ x_n \in S \wedge \hat y_n \neq y_n }
  \leq O\del*{ \frac{\dcs \log(n)}{n} }
  \label{eq:multigroup_oblivious}
  .
\end{equation}
Note that the high probability event in which the bound from \Cref{eq:multigroup_oblivious} holds relies on the group $S \in \groups$ being fixed, which is analogous to bounds on \Cref{eq:transductive_err_multi} that also have $S \in \groups$ fixed independently of the choice of $i$.
Notably, the right-hand side in \Cref{eq:multigroup_oblivious} has no dependence on $\groups$ whatsoever (although it picks up a $\log(n)$ factor, for the same reasons it appears in \Cref{eq:single_erm}).

In summary, we find that the statistical error rate bounds from the single-group setting have analogues in the multi-group setting where the dependence on the family of groups $\groups$ is at worst an additive term or non-existent for the oblivious adversary guarantee.
Multi-group transductive learning has not been previously studied in the literature, so we ask: does the transductive error rate bound \Cref{eq:single} from the single-group setting have an analogue in the multi-group setting, and if so, how (if at all) must it depend on the family of groups $\groups$?

\section{Transductive learning and orientations of one-inclusion graphs}
\label{sec:orientations}

This \namecref{sec:orientations} sets up the graph-theoretic framework of~\citet{haussler1994predicting} for (both single-group and multi-group) transductive learning in which prediction strategies are orientations of a certain induced subgraph of the $n$-dimensional Boolean hypercube.

An \emph{orientation} $\phi \colon E \to V$ of an undirected graph $G = (V,E)$ is a function that maps each edge $e$ to an incident vertex $\phi(e) \in e$.
Graphically, $\phi$ produces a directed edge $(u,v)$ when $\phi(\set{u,v}) = u$.
The out-degree of a vertex $v \in V$ in a subgraph $H$ of $G$ (with edges $E(H)$) oriented by $\phi$ is
\begin{equation*}
  \outdeg_{H,\phi}(v) \coloneqq \card{\phi^{-1}(v) \cap E(H)} = \card{\Set{ e \in E(H) \given \phi(e) = v }} .
\end{equation*}

The \emph{$n$-dimensional Boolean hypercube $Q_n$} is the undirected graph with vertex set $\bits^n$ and edge set containing all pairs of vertices $\set{u, v}$ that differ in exactly one position.
Given $n$ points $U = (x_1,\dotsc,x_n) \in \domain^n$ from a domain $\domain$ and a concept class $\concepts \subseteq \bits^{\domain}$ defined over this domain, the \emph{one-inclusion graph $G_{\concepts|_U}$ of $\concepts$ with respect to $U$} is the induced subgraph of $Q_n$ corresponding to the vertex set $V(G_{\concepts|_U}) \coloneqq \Set{ (c(x_1),\dotsc,c(x_n)) \given c \in \concepts }$ of "behaviors" of $\concepts$ on $U$; let $E(G_{\concepts|_U})$ denote the edges in this subgraph.

Although we have regarded $\groups$ as a family of subsets of the domain $\domain$, it is clear that since $U \in \domain^n$ is fixed, we can identify each $S \in \groups$ with a subset of $[n]$.
So henceforth, we overload the symbol $S$ to represent the subset of indices $i' \in [n]$ with $x_{i'} \in S$, and thus also regard $\groups$ as a subset of $2^{[n]}$.
In fact, any induced subgraph of $Q_n$ can be regarded as a one-inclusion graph over the domain $\domain \coloneqq [n]$ for a concept class specified by the vertices of the induced subgraph.

\paragraph{Prediction strategies as orientations of a one-inclusion graph.}

We now explain the sense in which a transductive learner's prediction strategy corresponds to orientations of $G_{\concepts|_U}$ (whether in the single-group or multi-group setting).
After $U = (x_1,\dotsc,x_n) \in \domain^n$ and $\concepts$ are fixed, the one-inclusion graph $G_{\concepts|_U}$ is determined, and the transductive learning scenario proceeds as follows.
\begin{itemize}
  \item[1.] Nature chooses a vertex $y = (y_1,\dotsc,y_n) \in V(G_{\concepts|_U})$.
  \item[2.] The test point index $i \in [n]$ is chosen uniformly at random.
  \item[3.] The learner receives only the labels $(y_j)_{j \neq i}$.
  \item[4.] The learner makes a (possibly randomized) prediction $\hat y_i$ of the label $y_i$.
\end{itemize}
The labels $(y_j)_{j \in [n]}$ correspond to some vertex in $G_{\concepts|_U}$, but the learner only has $(y_j)_{j \neq i}$; these labels will be consistent with either one or two vertices in $G_{\concepts|_U}$.
\begin{itemize}
  \item If there is only one consistent vertex in $G_{\concepts|_U}$, then the label $y_i$ is already known to the learner.

  \item If there are two consistent vertices $u$ and $v$ (differing only in the $i$-th position, $u_i \neq v_i$, so $\set{u,v}$ is an edge in $G_{\concepts|_U}$), then the prediction of the learner corresponds to a choice of one of these vertices, i.e., an orientation of the edge $\set{u,v}$.
    By convention, if the learner predicts the label $u_i$ (say), then we say the learner's orientation $\phi$ of $G_{\concepts|_U}$ has $\phi(\set{u,v}) = v$.

\end{itemize}
Every choice of the target concept by Nature corresponds to some vertex in $G_{\concepts|_U}$, and every choice of $i \in [n]$ (the index of the test point) specifies a potential edge incident with Nature's chosen vertex.
Thus, a prediction strategy of the learner corresponds to an orientation of $G_{\concepts|_U}$.
And vice versa: an orientation $\phi$ of $G_{\concepts|_U}$ defines a prediction strategy for every choice of $i \in [n]$.

A learner that makes randomized predictions is specified by pairs of non-negative numbers $p_{\set{u,v},u}$ and $p_{\set{u,v},v}$ summing to one for every edge $\set{u,v} \in E(G_{\concepts|_U})$: with probability $p_{\set{u,v},v}$, the learner predicts the label consistent with vertex $u$; with remaining probability $1 - p_{\set{u,v},v} = p_{\set{u,v},u}$, it predicts the label consistent with vertex $v$.
We may think of the learner as having a randomized orientation $\phi$ in which $\phi(\set{u,v})$ is $u$ with probability $p_{\set{u,v},u}$ and is $v$ with probability $p_{\set{u,v},v}$.

\paragraph{Transductive error rates.}

With the identification of a transductive learner's prediction strategy and an orientation $\phi$ of the one-inclusion graph $G_{\concepts|_U}$, we can write the transductive error rate in terms of the out-degree of Nature's chosen vertex in the oriented graph.
Let $v \in V(G_{\concepts|_U})$ be the vertex chosen by Nature.
In the single-group setting, \citet{haussler1994predicting} showed:
\begin{equation}
  \label{eq:error_degree}
  \Pr\sbr*{ \hat y_i \neq y_i }
  = \frac1n \E\sbr*{ \outdeg_{G_{\concepts|_U},\phi}(v) } .
\end{equation}
Therefore, an orientation $\phi$ with uniformly small out-degrees yields a small transductive error rate in the worst case (over Nature's choice of $v$).
The optimal worst case out-degree is controlled by the maximum subgraph edge density of the graph~\citep{alon1992colorings}.
For any graph $G$, define
\begin{equation*}
  \rho(G) \coloneqq \max_{\substack{U \subseteq V(G) \\ \text{s.t.}\, U \neq \emptyset}} \frac{\card{E(G[U])}}{\card{U}}
\end{equation*}
where $G[U]$ denotes the induced subgraph of $G$ on vertices $U \subseteq V(G)$.
\begin{proposition}[\citealp{alon1992colorings,asilis2024regularization}]
  \label{prop:density}
  For any graph $G$,
  \begin{equation*}
    \min_\phi \max_{v \in V(G)} \E\sbr*{ \outdeg_{G,\phi}(v) } = \rho(G)
  \end{equation*}
  where the minimization is taken over randomized orientations of $G$.
\end{proposition}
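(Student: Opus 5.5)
We prove the two inequalities separately, working with a finite graph $G=(V,E)$ and writing a randomized orientation through the probabilities $p_{e,v}\ge 0$, $p_{e,u}+p_{e,v}=1$ for each edge $e=\set{u,v}$. Then $\E\sbr{\outdeg_{G,\phi}(v)}=\sum_{e\ni v}p_{e,v}$, which is linear in the $p$'s. The problem becomes a fractional orientation (load-balancing) problem, and the equality follows from a max-flow/min-cut or LP duality argument.

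\textbf{Lower bound.} Fix any randomized orientation and let $U\subseteq V$ be nonempty, with $W\coloneqq U$ achieving $\rho(G)$. Sum the expected out-degrees over $W$. Every edge of $G[W]$ has both endpoints in $W$, so its two probabilities together contribute exactly $1$. Edges leaving $W$ contribute nonnegatively. Hence
\begin{equation*}
  \sum_{v\in W}\E\sbr{\outdeg_{G,\phi}(v)}\ \ge\ \card{E(G[W])},
\end{equation*}
and some $v\in W$ has expected out-degree at least $\card{E(G[W])}/\card{W}=\rho(G)$. This holds for every $\phi$, which gives $\ge$.

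\textbf{Upper bound.} Set $\rho=\rho(G)$ and build a flow network. There is a source $s$, a node for each edge $e\in E$ with capacity $1$ from $s$ to $e$, arcs of infinite capacity from $e$ to each endpoint of $e$, and an arc of capacity $\rho$ from each vertex $v$ to a sink $t$. A flow saturating all source arcs (value $\card{E}$) gives a fractional orientation: set $p_{e,v}$ equal to the flow on the arc $e\to v$. Each $v$ then has expected out-degree at most $\rho$.

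By max-flow/min-cut it suffices to show every $s$–$t$ cut has capacity at least $\card{E}$. Take a cut, let $W$ be the set of vertices on the source side, and let $F$ be the set of edge-nodes on the source side.
\begin{itemize}
  \item Finiteness of the cut forces every $e\in F$ to have both endpoints in $W$, so $F\subseteq E(G[W])$.
  \item The cut capacity is therefore $\card{E\setminus F}+\rho\card{W}\ \ge\ \card{E}-\card{E(G[W])}+\rho\card{W}$.
  \item By the definition of $\rho$ (trivially when $W=\emptyset$), $\card{E(G[W])}\le\rho\card{W}$, so the capacity is at least $\card{E}$.
\end{itemize}
Since capacities may be real, we use the real-valued max-flow/min-cut theorem; equivalently, this is LP duality applied to the linear program for the fractional orientation.

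\textbf{Main obstacle and remarks.} The main obstacle is the cut analysis in the upper bound: identifying that finite cuts correspond to vertex sets $W$ together with a subset of their induced edges. The max of $\rho$ is attained because $V$ is finite, so the minimum over orientations is also attained: the feasible set is compact and the objective is continuous. This establishes the equality stated in \Cref{prop:density}.
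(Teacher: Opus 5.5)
Your proof is correct and complete. Averaging expected out-degrees over a densest set $W$ gives the bound $\ge \rho(G)$. For the other direction, the max-flow/min-cut analysis shows that finite cuts force the source-side edge-nodes into $E(G[W])$, so every cut has capacity at least $\card{E}$. This yields a fractional orientation with every expected out-degree at most $\rho(G)$, which also shows the minimum is attained, so the compactness remark is unnecessary. The paper does not prove this proposition itself but cites \citet{alon1992colorings,asilis2024regularization}. Your flow/LP-duality argument is the standard proof of that result, and it is the same network-flow idea the paper borrows from \citet{haussler1994predicting} in \Cref{sec:aggregation}.
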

\citeauthor{haussler1994predicting} showed that if $\concepts$ has VC dimension $d_{\concepts}$, then there is an orientation $\phi$ such that $\outdeg_{G_{\concepts|_U}, \phi}(v) \leq d_{\concepts}$ for all $v \in V(G_{\concepts|_U})$.

In the multi-group setting, the transductive error rate on group $S$ only counts a mistake when $i \in S$.
Define the \emph{$S$-relevant subgraph $G_{\concepts|_U}^S$ of $G_{\concepts|_U}$} to be the (spanning) subgraph that retains only edges $\set{u,v} \in E(G_{\concepts|_U})$ for which the unique $i' \in [n]$ with $u_{i'} \neq v_{i'}$ also satisfies $i' \in S$.
We have the following generalization of \Cref{eq:error_degree}.
\begin{proposition}
  If Nature chooses vertex $y \in V(G_{\concepts|_U})$, then the transductive error rate of the learner on a group $S \subseteq [n]$ is
  \begin{equation*}
    \Pr\sbr*{ x_i \in S \wedge \hat y_i \neq y_i }
    = \frac1n \E\sbr*{ \outdeg_{G_{\concepts|_U}^S,\phi}(y) }
  \end{equation*}
  where $\phi$ is the (possibly randomized) orientation of $G_{\concepts|_U}$ corresponding to the learner.
\end{proposition}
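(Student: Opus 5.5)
The identity follows by conditioning on the test index $i$ and matching each error event to an out-edge of $y$, exactly as in the single-group identity \Cref{eq:error_degree} of \citet{haussler1994predicting}. The one new ingredient is that the indicator $\ind{i \in S}$ keeps only those edges whose differing coordinate lies in $S$, which is precisely the edge set of $G^S_{\concepts|_U}$.

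Start from the decomposition in \Cref{eq:transductive_err_multi}:
\[
\Pr\sbr*{ x_i \in S \wedge \hat y_i \neq y_i } = \frac1n \sum_{i' \in S} \Pr\Sbr*{ \hat y_{i'} \neq y_{i'} \given i = i' } .
\]
Indices $i' \notin S$ contribute zero. Fix $i' \in S$; there are two cases.

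\begin{itemize}
  \item If no vertex other than $y$ agrees with $y$ off coordinate $i'$, the learner knows $y_{i'}$ and errs with probability $0$. In this case no edge of $G_{\concepts|_U}$ at $y$ lies in direction $i'$.
  \item Otherwise there is a unique $u \neq y$ with $\set{y,u} \in E(G_{\concepts|_U})$ differing exactly at $i'$. By the paper's convention, the learner predicts $u_{i'} \neq y_{i'}$ exactly when $\phi(\set{y,u}) = y$. Hence the conditional error probability equals $\Pr\sbr{\phi(\set{y,u}) = y}$, which is $p_{\set{y,u},y}$ in the randomized case.
\end{itemize}

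Summing over $i' \in S$, each edge incident to $y$ in direction $i'$ appears exactly once. This uses the fact that $Q_n$ has at most one edge at $y$ per coordinate, so edges at $y$ in $G_{\concepts|_U}$ are in bijection with their directions. The edges with direction in $S$ are by definition exactly the edges of $G^S_{\concepts|_U}$ incident to $y$. So the sum equals
\[
\sum_{e \in E(G^S_{\concepts|_U}),\, y \in e} \Pr\sbr{\phi(e) = y} = \E\sbr*{ \outdeg_{G^S_{\concepts|_U},\phi}(y) }
\]
by linearity of expectation, since the out-degree is a sum of indicators $\ind{\phi(e)=y}$ over $e \in E(G^S_{\concepts|_U})$. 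Dividing by $n$ gives the claim.

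There is no real obstacle. The only care needed is in keeping the orientation convention straight (an edge oriented toward $y$, i.e.\ $\phi(e)=y$, means predicting the label of the other endpoint) and in noting the one-edge-per-direction property. The special case $S=[n]$ recovers \Cref{eq:error_degree}, which serves as a consistency check.
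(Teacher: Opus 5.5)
Your proof is correct and takes essentially the same route as the paper. You decompose over the test index $i'$, match each error at $i' \in S$ with the event $\phi(\set{y,u}) = y$ for the unique edge at $y$ in direction $i'$, and sum by linearity of expectation to get the expected out-degree of $y$ in $G^S_{\concepts|_U}$. Your explicit case split and the one-edge-per-direction remark simply spell out the step that the paper's proof compresses into a single indicator identity.
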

\begin{proof}
  By linearity of expectation and direct computation:
  \begin{align*}
    \Pr\sbr*{ x_i \in S \wedge \hat y_i \neq y_i }
    & = \frac1n \sum_{i'=1}^n \E\sbr*{ \ind{ i' \in S \wedge \hat y_{i'} \neq y_{i'} } } \\
    & = \frac1n \E\sbr[\bigg]{ \sum_{i'=1}^n \ind{ i' \in S \wedge \exists \set{u,y} \in E(G_{\concepts|_U}) \,\text{s.t.}\, u_{i'} \neq y_{i'} \wedge \phi(\set{u,y}) = y } } \\
    & = \frac1n \E\sbr[\bigg]{ \sum_{\set{u,y} \in E\del{G_{\concepts|_U}^S}} \ind{ \phi(\set{u,y}) = y } }
    = \frac1n \E\sbr*{ \outdeg_{G_{\concepts|_U}^S,\phi}(y) }
  \end{align*}
  as claimed.
\end{proof}

Suppose for the moment that $\dcs$ is small for all $S \in \groups$; this is the case, e.g., when $\concepts$ is group-realizable with respect to $\groups$ and $\hypotheses$ and $\dhyp$ is small.
Then, for each $S \in \groups$, the $S$-relevant subgraph $G_{\concepts|_U}^S$ decomposes, after fixing the labels outside $S$, into subgraphs that project into the one-inclusion graph $G_{\concepts|_{U_S}}$ of $\concepts$ with respect to $U_S \coloneqq (x_i)_{i \in S}$.
Consequently, an orientation of $G_{\concepts|_{U_S}}$ with maximum out-degree at most $\dcs$ can be used, fiber by fiber, to construct an orientation of $G_{\concepts|_U}^S$ with maximum out-degree at most $\dcs$.
So, considering any group $S \in \groups$ in isolation, it is possible to achieve small transductive error rate on group $S$.
But can this be made compatible with achieving small transductive error rates on other groups in $\groups$?

\paragraph{Price of multi-group transductive learning.}

We now sharpen the question from the end of \Cref{sec:multigroup}.
For each $S \in \groups$, define the (unnormalized) \emph{minimax transductive error rate on group $S$ (for $G_{\concepts|_U}$)} to be
\begin{equation*}
  \opt_S(G_{\concepts|_U}) \coloneqq \min_{\phi_S} \max_{v \in V(G_{\concepts|_U})} \outdeg_{G_{\concepts|_U}^S,\phi_S}(v) ,
\end{equation*}
where the $\min$ is taken over (deterministic) orientations $\phi_S$ of $G_{\concepts|_U}$, and define the \emph{price of multi-group transductive learning (for $G_{\concepts|_U}$)} to be
\begin{equation}
  \label{eq:price}
  \min_{\phi} \max_{S} \frac{\displaystyle \max_{v \in V(G_{\concepts|_U})} \E\sbr[\Big]{\outdeg_{G_{\concepts|_U}^S,\phi}(v)}}{\opt_S(G_{\concepts|_U})} ,
\end{equation}
with the $\min$ taken over randomized orientations $\phi$ of $G_{\concepts|_U}$, and with the $\max$ taken over groups $S \in \groups$ with non-empty $E(G_{\concepts|_U}^S)$ (and hence $\opt_S(G_{\concepts|_U}) > 0$).\footnote{%
  Restricting to deterministic orientations here can only change the price by at most a factor of two; see \Cref{sec:derandomization}.
  We also consider a more relaxed notion of the price in \Cref{sec:relaxed}.%
}
The fraction in \Cref{eq:price} is the ratio of the error rate of the multi-group learner (corresponding to $\phi$) on group $S$ to that of a learner who only needs to consider group $S$ in isolation.
We ask the following:
\emph{What is the price of multi-group transductive learning for the worst case choice of $G_{\concepts|_U}$ and $\groups$ (say, as a function of $n$ and $\card{\groups}$)?}

As a warm-up, consider the special case where the groups are disjoint.
Then the edge sets of the group-relevant subgraphs are disjoint.
In this case, it is not difficult to see that the price of multi-group transductive learning is exactly one, independent of how many groups there are.
Thus, a worse case must have intersecting groups.

\section{Lower bound on the price of multi-group transductive learning} \label{sec:lb}

The following \namecref{thm:lb} implies a lower bound on the price of multi-group transductive learning.

\begin{theorem}
  \label{thm:lb}
  For any positive integers $n$ and $k \geq 2$ with $\binom{k}{2} \leq n$,
  there is a one-inclusion graph $G = (V,E)$ (as an induced subgraph of $Q_n$) and a family of groups $\groups \subseteq 2^{[n]}$
  with $\card{\groups} = k$
  such that the following holds:
  \begin{itemize}
    \item For each $S \in \groups$, we have $\opt_S(G) = 1$.

    \item For any randomized orientation $\phi$ of $G$, there is a vertex $v \in V$ and group $S \in \groups$ such that $\E\sbr{ \outdeg_{G^S,\phi}(v) } \geq (k-1)/2$.
  \end{itemize}
\end{theorem}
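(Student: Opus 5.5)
The plan is to index coordinates by pairs of groups so that every edge is relevant to exactly two groups, and then play off a per-group upper bound against a global averaging lower bound. Identify $\binom{k}{2}$ of the $n$ coordinates with the unordered pairs $\{a,b\}\subseteq[k]$, and fix all remaining coordinates to $0$ in every vertex. For $a\in[k]$ let $S_a$ be the set of pair-coordinates containing $a$, and put $\groups=\{S_1,\dotsc,S_k\}$. Then $\card{\groups}=k$, $\card{S_a}=k-1$, and every pair-coordinate lies in exactly two groups. Consequently every edge of $G$ lies in exactly two of the relevant subgraphs $G^{S_a}$, so for every orientation and every vertex $v$,
\begin{equation*}
\sum_{a=1}^k \outdeg_{G^{S_a},\phi}(v) = 2\,\outdeg_{G,\phi}(v).
\end{equation*}

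The base choice for $G$ is the star: the center $0$ together with the leaves $e_{\{a,b\}}$. In $G^{S_a}$ the center has the $k-1$ incident edges $\{0,e_{\{a,b\}}\}$, $b\neq a$, and $G^{S_a}$ is a forest, so $\opt_{S_a}(G)=1$. The lower bound then comes from averaging. Let $q$ be the average, over the $\binom{k}{2}$ star edges, of the probability that an edge is charged to the center. By the displayed identity, some group $S_a$ has
\begin{equation*}
\E\sbr*{\outdeg_{G^{S_a},\phi}(0)} \geq \frac{2}{k}\,q\binom{k}{2} = q(k-1).
\end{equation*}
So if $q\ge 1/2$ we are done.

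The star alone fails: charging every edge to its leaf gives out-degree at most $1$ everywhere and every group, so the construction must be enriched until the leaves cannot absorb their edges. The target structure is \emph{vertex-transitive in the pair-coordinates}: every vertex $v$ should look like the center, i.e.\ have exactly one incident edge in each pair-coordinate, while each $G^{S_a}$ still decomposes, after fixing the coordinates outside $S_a$, into fibers of density at most $1$ (so $\opt_{S_a}=1$ via \Cref{prop:density}). Under such symmetry every vertex has degree $\binom{k}{2}$, hence global density $\binom{k}{2}/2$. \Cref{prop:density} then forces some vertex $v$ with $\E\sbr{\outdeg_{G,\phi}(v)}\ge \binom{k}{2}/2$, and the displayed identity yields a group $S_a$ with $\E\sbr{\outdeg_{G^{S_a},\phi}(v)}\ge \frac{2}{k}\cdot\frac12\binom{k}{2}=(k-1)/2$, which is exactly the claimed bound.

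The main obstacle is producing this $G$ inside $Q_{\binom{k}{2}}$: it must have (near-)uniform degree $\binom{k}{2}$, yet each $S_a$-fiber must be a forest, or at least have density at most $1$. These pull against each other. Within a fiber every vertex has $S_a$-degree $k-1$, and a regular graph of degree $k-1\ge 3$ has density $(k-1)/2>1$, so exact regularity is impossible. What is really needed is only a subgraph of global density about $\binom{k}{2}/2$, with the per-group sparsity arising because $S_a$-edges from different fibers never interact. My first attempt would be to take $V$ as the set of vectors $x$ such that, for each $a$, the restriction $x|_{S_a}$ has at most one $1$ (i.e.\ the supports are matchings on $[k]$); here the $S_a$-fibers are stars. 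I would then check whether the density of this matching complex is large enough. If it falls short, I would fall back to weighting the averaging argument by the Alon-type fractional orientation from \Cref{prop:density} applied to the densest subgraph, rather than requiring exact regularity.
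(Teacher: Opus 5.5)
Your coordinates and groups are the paper's, and the identity $\sum_a \outdeg_{G^{S_a},\phi}(v) = 2\,\outdeg_{G,\phi}(v)$ is correct. But there is a genuine gap: the proposal never constructs $G$, and the route you set up cannot succeed. Uniform averaging over the $k$ groups only yields a group with $\E\sbr{\outdeg_{G^{S_a},\phi}(v)} \geq \frac{2}{k}\rho(G)$. Reaching $(k-1)/2$ therefore requires $\rho(G) \geq \frac12\binom{k}{2}$.

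Since you fix every coordinate outside the pairs, $G$ is effectively an induced subgraph of $Q_{\binom{k}{2}}$. The edge-isoperimetric inequality $\card{E(H)} \leq \frac12\card{V(H)}\log_2\card{V(H)}$ for subgraphs of the cube then forces $G = Q_{\binom{k}{2}}$. Its $S_a$-fibers are copies of $Q_{k-1}$, so $\opt_{S_a}(G) \geq (k-1)/2 > 1$ once $k \geq 4$.

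Your concrete candidate, the matching complex, fails outright. Set $\phi(\set{M, M \cup \set{e}}) = M \cup \set{e}$ for every edge. Then every vertex $M$ has out-degree $\card{M} \leq \lfloor k/2 \rfloor$ in $G$, and out-degree $\card{\set{e \in M : a \in e}} \leq 1$ in every $G^{S_a}$ simultaneously. So that instance has price exactly $1$. No reweighting of the averaging, including your fallback through the fractional orientation of \Cref{prop:density}, can extract a lower bound from it.

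The missing idea is to remove the averaging loss by design. Make each vertex's entire neighborhood lie inside a single group, so that only density $(k-1)/2$, not $\binom{k}{2}/2$, is needed. The paper builds a $(k-1)$-regular lift of $K_k$ as follows. Take the linear map $\lambda(e_{\set{i,j}}) = e_i + e_j$, set $w_i = e_i + e_k$ for $i<k$ and $w_k = 0$, and let $V_i = \lambda^{-1}(w_i)$. A vertex of $V_{i_0}$ is then adjacent exactly to its flips in the coordinates $\set{i_0,j}$, $j \neq i_0$, and each such flip lands in $V_j$.

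Hence $G^{S_{i_0}}$ is a disjoint union of stars centered in $V_{i_0}$, which gives $\opt_{S_{i_0}}(G) = 1$. Meanwhile $\rho(G) \geq \card{E}/\card{V} = (k-1)/2$. By \Cref{prop:density}, some vertex $v \in V_{i_0}$ has $\E\sbr{\outdeg_{G,\phi}(v)} \geq (k-1)/2$, and all of that out-degree is counted in $G^{S_{i_0}}$.

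Your instinct that the group fibers should be stars was right. What must change is the role of each vertex: it should be a star center for one group and a leaf for all the others, rather than looking like the center in every coordinate.
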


\Cref{thm:lb} shows that the price of multi-group transductive learning can be as large as $(\card{\groups}-1) / 2$, which is increasing linearly with $\card{\groups}$ up until $k \approx \sqrt{2n}$, upon which the lower bound on the price is about $\sqrt{n/2}$.
In contrast, the analogous "price of multi-group statistical learning" is never worse than $\log(n)$ for the guarantee against the oblivious adversary.\footnote{%
  We do not know if the $\log(n)$ factor is needed in multi-group statistical learning for the guarantee against the oblivious adversary; the $\log(n)$ is only an upper-bound.%
}

The proof of \Cref{thm:lb} strikes a careful balance between constructing a one-inclusion graph with high overall edge density, and constructing (coordinate-defined) groups so that every vertex's neighborhood is in some group's relevant subgraph that has a low out-degree orientation.
Our one-inclusion graph is a particular lift of a complete graph on $k$ vertices;
each group's relevant subgraph turns out to be the disjoint union of stars, all of which are easily oriented to ensure out-degree at most one for all vertices.
Thus, every group is "easy" to orient in isolation, but because of the edge density of the overall graph, every orientation leaves some vertex with large out-degree, and that vertex's neighborhood---a star centered at the vertex---is wholly contained in some group.
The size of the star is also proportional to $k$, the number of groups.

Our lift of the complete graph on $k$ vertices requires $n \geq \binom{k}{2}$.
Although it is possible to obtain such lifts with only $n \asymp k$, the resulting one-inclusion graph would not have the other properties needed to imply the claimed lower bound on the price of multi-group transductive learning.
Indeed, the upper bounds in \Cref{sec:ub} show that our construction is quantitively optimal up to constants.

\begin{proof}[Proof of \Cref{thm:lb}]
  We may assume without loss of generality that $n = \binom{k}{2}$ for positive integer $k \geq 2$, since otherwise we can pad every vertex with zeros in the remaining $n-\binom{k}{2}$ coordinates.
  We identify the indices in $[n]$ with unordered pairs $\set{i,j} \in \kpairs \coloneqq \binom{[k]}{2}$, and we identify vertices of $Q_n$ as vectors from $\F_2^{\kpairs}$.
  Define the linear map $\lambda \colon \F_2^{\kpairs} \to \F_2^k$ by its action on the coordinate basis vectors $e_{\set{i,j}}$ for $\F_2^{\kpairs}$ for all $\set{i,j} \in \kpairs$:
  \begin{equation*}
    \lambda(e_{\set{i,j}}) \coloneqq e_i + e_j ,
  \end{equation*}
  where $e_1,\dotsc,e_k$ are the coordinate basis vectors for $\F_2^k$.
  So the image of $\lambda$ is the even parity subspace of $\F_2^k$.
  Further, define $w_i \coloneqq e_i + e_k \in \F_2^k$ for each $i \in \set{1,\dotsc,k-1}$, and $w_k \coloneqq 0 \in \F_2^k$.
  Observe that each $w_i$ for $i \in [k]$ has even parity (and hence is in the image of $\lambda$), and
  \begin{equation*}
    w_i + w_j = e_i + e_j
    \quad \text{for all $\set{i,j} \in \kpairs$} .
  \end{equation*}

  Let $G = (V,E)$ be the subgraph of $Q_n$ induced by the vertex set $V \coloneqq V_1 \sqcup \dotsb \sqcup V_k$, where
  \begin{equation*}
    V_i \coloneqq \lambda^{-1}(w_i)
    \quad \text{for all $i \in [k]$} .
  \end{equation*}
  Note that each $V_i$ for $i \in [k]$ is non-empty.
  We now explicate the connectivity structure of this induced subgraph.
  For any $i_0 \in [k]$, $v \in V_{i_0}$, and $\set{i,j} \in \kpairs$, we have
  \begin{equation*}
    \lambda(v + e_{\set{i,j}})
    = \lambda(v) + \lambda(e_{\set{i,j}})
    = w_{i_0} + e_i + e_j
    = w_{i_0} + w_i + w_j ,
  \end{equation*}
  and therefore
  \begin{equation*}
    v + e_{\set{i,j}} \in V \iff i_0 \in \set{i,j} .
  \end{equation*}
  Indeed, if $i_0 = i$ (say), then $\lambda(v + e_{\set{i,j}}) = w_j \in V_j$; and if $i_0,i,j$ are distinct, then $\lambda(v + e_{\set{i,j}}) = w_{i_0} + w_i + w_j$ has odd parity and hence is not in the image of $\lambda$.
  This shows that $G$ is a lift of the complete graph on $k$ vertices, and in particular, each vertex has degree $k-1$.

  Define the family of groups by $\groups \coloneqq \Set{ S_i \given i \in [k]}$, where 
  \begin{equation*}
    S_i \coloneqq \Set*{ \set{i,j} \given j \in [k] \setminus \set{i} }
    \quad \text{for all $i \in [k]$} .
  \end{equation*}
  Consider group $S_{i_0}$ for some $i_0 \in [k]$.
  The edge set for the subgraph $G^{S_{i_0}}$ is
  \begin{equation*}
    E(G^{S_{i_0}}) = \Set*{ \set{ v, v + e_{\set{i_0,j}} } \given v \in V_{i_0}, j \in [k] \setminus \set{i_0} } ,
  \end{equation*}
  which is non-empty since $k\geq2$.
  So $G^{S_{i_0}}$ contains all edges of $G$ incident on vertices in $V_{i_0}$; and each vertex $v \in V_j$ for $j \neq i_0$ has exactly one neighbor in $G^{S_{i_0}}$, namely $v + e_{\set{i_0,j}}$.
  This means that $G^{S_{i_0}}$ is a disjoint union of "stars", with each vertex in $V_{i_0}$ being the center of a star with $k-1$ "spokes".

  For each $S \in \groups$, consider the orientation $\phi_S$ that directs each edge in $E(G^S)$ toward the center of the star containing that edge; this satisfies
  \begin{equation*}
    \max_{v \in V} \outdeg_{G^S,\phi_S}(v) \leq 1 .
  \end{equation*}
  Since $E(G^S)$ is non-empty, some vertex must have out-degree at least one, so
  \begin{equation*}
    \opt_S(G) = 1 .
  \end{equation*}

  Now consider any randomized orientation $\phi$ of $G$.
  By \Cref{prop:density}, we have
  \begin{equation*}
    \max_{v \in V(G)}
    \E\sbr*{ \outdeg_{G,\phi}(v) }
    = \rho(G) \geq \frac{\card{E}}{\card{V}}
    .
  \end{equation*}
  So there is a vertex $v \in V$ such that
  \begin{equation*}
    \E\sbr*{ \card{\phi^{-1}(v)} }
    = \E\sbr*{ \outdeg_{G,\phi}(v) }
    \geq \frac{\card{E}}{\card{V}}
    = \frac{k-1}{2} ,
  \end{equation*}
  where the last equality follows from the fact that $G$ is $(k-1)$-regular.
  Fix such a vertex $v \in V$, and let $i_0 \in [k]$ be such that $v \in V_{i_0}$.
  Since $G^{S_{i_0}}$ contains all edges of $G$ incident on $v$, we have $\phi^{-1}(v) = \phi^{-1}(v) \cap E(G^{S_{i_0}})$.
  Therefore, for this vertex $v$ and group $S \coloneqq S_{i_0}$,
  \begin{equation*}
    \label{eq:outdeg_lb}
    \E\sbr*{ \outdeg_{G^S,\phi}(v) }
    = \E\sbr*{ \card{\phi^{-1}(v) \cap E(G^S)} }
    = \E\sbr*{ \card{\phi^{-1}(v)} }
    \geq \frac{k-1}{2}
  \end{equation*}
  as required.
\end{proof}

\section{Methods for finding orientations with bounded group-relevant out-degrees} \label{sec:ub}

In this \namecref{sec:ub}, we give two methods for finding orientations of graphs that imply bounds on the price of multi-group transductive learning.
The first method ensures a bound on the price of $O(\sqrt{n})$, and the second ensures a bound of $O(\card{\groups})$.

\subsection{Peeling method}
\label{sec:peeling}

Our first method is a variant of the ``peeling algorithm'' of \citet{kuzmin2007unlabeled}; it repeatedly chooses a vertex and orients all (previously unoriented) incident edges away from the chosen vertex.
The order in which vertices are chosen is based on the out-degree bounds that are ultimately sought.

\begin{theorem}
  \label{thm:sqrtn}
  There is an algorithm that, given any induced subgraph $G$ of $Q_n$ and any family $\groups$ of subsets of $[n]$, constructs an orientation $\phi$ of $G$ such that for all $S \in \groups$ and all $v \in V(G)$,
  \begin{equation*}
    \outdeg_{G^S,\phi}(v) \leq 2\sqrt{n} \rho(G^S) .
  \end{equation*}
\end{theorem}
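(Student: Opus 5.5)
The plan is to run a peeling procedure in the spirit of \citet{kuzmin2007unlabeled}. We maintain a set $W \subseteq V(G)$ of remaining vertices, initially $W = V(G)$. At each round we select a vertex $v \in W$, orient every still-unoriented edge of $G[W]$ incident to $v$ away from $v$ (that is, $\phi(e) = v$), and delete $v$ from $W$. Every edge is oriented exactly once, namely when the first of its two endpoints is removed. Hence the final out-degree of $v$ in $G^S$ equals $d^W_S(v)$, the degree of $v$ in $G^S[W]$ at the moment $v$ is removed. It therefore suffices to show that at every stage some $v \in W$ satisfies
\begin{equation*}
  d^W_S(v) \leq 2\sqrt{n}\,\rho(G^S) \quad \text{for all } S \in \groups \text{ simultaneously},
\end{equation*}
and to let the algorithm pick such a vertex.

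Two facts will do most of the work.

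\begin{enumerate}
  \item \textbf{Monotonicity and averaging.} Since $G^S[W]$ is an induced subgraph of $G^S$, we have $\card{E(G^S[W])} \leq \rho(G^S)\card{W}$. Consequently the average of $d^W_S$ over $W$ is at most $2\rho(G^S)$, and by Markov's inequality fewer than $\card{W}/\sqrt{n}$ vertices of $W$ have $d^W_S(v) > 2\sqrt{n}\,\rho(G^S)$.
  \item \textbf{Trivial range.} If $E(G^S) \neq \emptyset$, then $G^S$ contains a star with at least one edge, so $\rho(G^S) \geq 1/2$. Hence any vertex with $d^W_S(v) \leq \sqrt{n}$ automatically satisfies the bound for $S$. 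In particular every $S$ with $\card{S} \leq \sqrt{n}$ is harmless, since $d^W_S(v) \leq \card{S}$ for a subgraph of $Q_n$.
\end{enumerate}

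So only "large" demands matter, namely pairs $(v,S)$ with $d^W_S(v) > \sqrt{n}$.

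The main obstacle is the simultaneity across possibly very many groups, because a union bound over the Markov estimates fails once $\card{\groups} > \sqrt{n}$. To get around this, I would exploit the hypercube structure through coordinates rather than groups. Write $m_j$ for the number of edges of $G[W]$ in direction $j$. Then
\begin{equation*}
  \rho(G^S) \geq \frac{\sum_{j \in S} m_j}{\card{W}}
  \quad\text{and}\quad
  d^W_S(v) = \card{\Set{ j \in S \given v \text{ has a $j$-edge in } G[W] }}.
\end{equation*}
So the requirement reduces to a condition on the set $J(v)$ of directions at $v$: for every $S$, the quantity $\card{J(v) \cap S}$ should be at most $2\sqrt{n}$ times the normalized mass $\sum_{j \in S} m_j/\card{W}$.

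I would first try choosing $v$ to minimize $\sum_{j \in J(v)} \card{W}/m_j$. On average over $W$ this sum is at most $2n$. One then hopes to split $J(v)$ into light directions, those with $m_j/\card{W} \geq 1/(2\sqrt{n})$, each of which pays for itself against $\rho(G^S)$, and heavy-weight directions, which are few (at most about $\sqrt{n}$ of them) and are absorbed by the trivial $\sqrt{n} \leq 2\sqrt{n}\,\rho(G^S)$ bound. Making this bookkeeping give exactly the constant $2\sqrt{n}$ for every $S$ at once is the step I expect to require the most care. If the direct averaging fails, the fallback is to bound the light and heavy parts separately, for instance using $\card{J(v)} \leq 2\rho(G[W])$ for a minimum-degree vertex.
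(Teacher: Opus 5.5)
\textbf{Gap at the step you flagged.} Your reduction to the peeling step is exactly the paper's, and your selection rule is a good one: remove a vertex $v \in W$ minimizing $\Phi(v) \coloneqq \sum_{j \in J(v)} 1/m_j$. As you say, averaging gives $\Phi(v) \le 2n/\card{W}$. The gap is the final step, which you leave open, and the light/heavy split you sketch does not reach the constant $2$.

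\begin{itemize}
\item Light directions in $S$ number at most $2\sqrt{n}\,\rho(G^S)$.
\item Heavy directions number fewer than $\sqrt{n}$.
\item Together this gives only $d^W_S(v) \le 2\sqrt{n}\,\rho(G^S) + \sqrt{n} \le 4\sqrt{n}\,\rho(G^S)$.
\end{itemize}

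Optimizing the threshold does not rescue it: you get $2\sqrt{2n\rho(G^S)}$, which exceeds $2\sqrt{n}\,\rho(G^S)$ whenever $\rho(G^S) < 2$. Your fallback via a minimum-degree vertex does not help either, since $\rho(G[W])$ carries no information about $\rho(G^S)$ for a particular $S$.

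\textbf{The missing inequality.} Replace the threshold split by Cauchy--Schwarz over $A \coloneqq J(v) \cap S$:
\begin{equation*}
  \card{A}^2 \le \Bigl( \sum_{j \in A} m_j \Bigr) \Bigl( \sum_{j \in A} \frac{1}{m_j} \Bigr) \le \card{E(G^S[W])} \cdot \frac{2n}{\card{W}} \le 2n \, \rho(G^S) .
\end{equation*}
This gives $d^W_S(v) \le \sqrt{2n\rho(G^S)}$. That is at most $2\sqrt{n}\,\rho(G^S)$ exactly when $\rho(G^S) \ge 1/2$, which holds whenever $d^W_S(v) > 0$ by your second fact.

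\textbf{Comparison with the paper.} With this line added, your argument is complete and differs genuinely from the paper's. The paper proves that a good vertex exists by contradiction:
\begin{itemize}
\item assign to each $v$ a violated group $S(v)$ and trim it to $T(v) = S(v) \cap C_H(v)$;
\item apply Cauchy--Schwarz across coordinates, $(\sum_i n_i)^2 \le n \sum_i n_i^2 \le 2n \sum_i m_i n_i$;
\item combine this with $\rho(H^{T(v)}) \ge 1/2$.
\end{itemize}
This leaves the algorithm to search for a vertex satisfying all $\card{\groups}$ density constraints. Your version is constructive: the vertex is chosen without looking at $\groups$ or computing any densities. It also yields the sharper bound $\outdeg_{G^S,\phi}(v) \le \sqrt{2n\rho(G^S)}$ for every $S \subseteq [n]$ simultaneously.
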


Returning to the setting of multi-group transductive learning with $G = G_{\concepts|_U}$, since $\rho(G^S)\leq \opt_S(G)$, \Cref{thm:sqrtn} (together with \Cref{prop:density}) implies the following upper bound on the price of multi-group transductive learning:
\begin{equation*}
  \min_{\phi} \max_{S} \frac{\displaystyle \max_{v \in V(G_{\concepts|_U})} \E\sbr[\Big]{\outdeg_{G_{\concepts|_U}^S,\phi}(v)}}{\opt_S(G_{\concepts|_U})} \leq 2\sqrt{n} .
\end{equation*}
(In fact, the orientation constructed by the peeling algorithm is deterministic; we conjecture that the factor of two can be removed using a randomized orientation.)

\begin{proof}[Proof of \Cref{thm:sqrtn}]
  The method for constructing the orientation $\phi$ is as follows.
  Start with $H \coloneqq G$.
  While $V(H)$ is non-empty, choose any $v \in V(H)$ satisfying
  \begin{equation*}
    \deg_{H^S}(v) \leq 2\sqrt{n} \rho(H^S) \quad \forall S \in \groups ,
  \end{equation*}
  and then update $H$ by removing the chosen $v$ and all incident edges.
  (Here, $\deg_{H^S}(v)$ refers to the undirected degree of vertex $v$ in $H^S$.)
  By \Cref{lem:low_degree}, such choices are possible at every step.
  For all edges $\set{u,v} \in E(G)$, set $\phi(\set{u,v}) \coloneqq u$ if $u$ is removed earlier than $v$ in this iterative process.

  Consider any $S \in \groups$ and any $v \in V(G)$.
  Let $H$ be the induced subgraph of $G$ just prior to the time $v$ is removed.
  Then the neighbors of $v$ in $H$ correspond to edges $\set{v,v'} \in E(H)$ with $\phi(\set{v,v'}) = v$, so $\outdeg_{G^S,\phi}(v) = \deg_{H^S}(v)$.
  Also, since $H$ is an induced subgraph of $G$, we have $\rho(H^S) \leq \rho(G^S)$.
  Therefore
  \begin{equation*}
    \outdeg_{G^S,\phi}(v) = \deg_{H^S}(v) \leq 2\sqrt{n} \rho(H^S) \leq 2\sqrt{n} \rho(G^S)
  \end{equation*}
  as required.
\end{proof}

The proof of \Cref{thm:sqrtn} crucially relies on the following \namecref{lem:low_degree}.
\begin{lemma}
  \label{lem:low_degree}
  For every non-empty induced subgraph $H$ of $Q_n$ and every family $\groups$ of non-empty subsets of $[n]$, there exists a vertex $v \in V(H)$ such that, for all $S \in \groups$,
  \begin{equation*}
    \deg_{H^S}(v) \leq 2\sqrt{n} \rho(H^S) .
  \end{equation*}
\end{lemma}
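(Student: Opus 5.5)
The plan is to choose a single vertex $v$ by averaging a potential that weights each hypercube direction inversely to how many edges of $H$ lie in that direction. Then, for every group simultaneously, we convert the resulting bound into a bound on $\deg_{H^S}(v)$ via Cauchy--Schwarz, and combine it with two lower bounds on $\rho(H^S)$.

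\textbf{Setup and choice of $v$.} For each coordinate $i \in [n]$, let $d_i$ be the number of edges of $H$ in direction $i$ (edges $\set{u,u+e_i}$ with both endpoints in $V(H)$). For a vertex $v$, let $N(v) \subseteq [n]$ be the set of directions $i$ with $v+e_i \in V(H)$; every $i \in N(v)$ has $d_i \geq 1$. Define
\begin{equation*}
  \Phi(v) \coloneqq \sum_{i \in N(v)} \frac{1}{d_i} .
\end{equation*}
Each edge in direction $i$ contributes $1/d_i$ to the potential of each of its two endpoints. Hence $\sum_{v \in V(H)} \Phi(v) = \sum_{i \colon d_i \geq 1} 2 d_i \cdot \frac{1}{d_i} \leq 2n$. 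So some $v$ has $\Phi(v) \leq 2n/\card{V(H)}$, and we fix this $v$. The point of the potential is that it does not depend on $\groups$, so the same $v$ serves all groups at once.

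\textbf{Bound for a fixed $S$.} Fix $S \in \groups$ and let $D \coloneqq \deg_{H^S}(v) = \card{N(v) \cap S}$. If $D = 0$ there is nothing to prove, so assume $D \geq 1$. By Cauchy--Schwarz over $i \in N(v)\cap S$,
\begin{equation*}
  D^2 \leq \Big(\sum_{i \in N(v)\cap S} d_i\Big)\Big(\sum_{i \in N(v)\cap S} \frac{1}{d_i}\Big) \leq \card{E(H^S)} \cdot \frac{2n}{\card{V(H)}} \leq 2n\,\rho(H^S),
\end{equation*}
where the last step uses $\rho(H^S) \geq \card{E(H^S)}/\card{V(H)}$ (take the whole vertex set in the definition of $\rho$). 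Thus $D \leq \sqrt{2n\rho(H^S)}$.

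\textbf{Lower bound on $\rho(H^S)$ and conclusion.} Since $D \geq 1$, $H^S$ has an edge $\set{v,v+e_i}$. The two-vertex set $\set{v, v+e_i}$ has exactly one edge of $H^S$, so $\rho(H^S) \geq 1/2$. Then $\sqrt{2n\rho} \leq 2\sqrt{n}\,\rho$ is equivalent to $2\rho \leq 4\rho^2$, i.e.\ $\rho \geq 1/2$, which holds. Therefore $D \leq 2\sqrt{n}\,\rho(H^S)$ for every $S \in \groups$.

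The step I expect to be the crux is finding a selection rule for $v$ that works for all groups simultaneously, since a union bound over $\groups$ is hopeless when $\groups$ is large. The inverse-direction-density potential together with Cauchy--Schwarz resolves this. The rest is bookkeeping. One must check that in the definition of $H^S$ the density is taken over subsets of $V(H)$, which is what makes the bound $\rho(H^S) \geq \card{E(H^S)}/\card{V(H)}$ valid. The hypothesis that the groups are non-empty is not actually needed by this argument.
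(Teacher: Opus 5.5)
Your proof is correct, and it takes a genuinely different route from the paper's. The paper argues by contradiction. It gives each vertex a violating group $S(v)$, shrinks it to $T(v)=S(v)\cap C_H(v)$, and refutes the summed inequality using \Cref{lem:degree_bound}. That lemma's proof applies Cauchy--Schwarz across directions, $(\sum_i n_i)^2\le n\sum_i n_i^2$, then the matching bound $n_i\le 2m_i$ (the paper's $m_i$ is your $d_i$), and finally $\rho(H^{T(v)})\ge 1/2$ summed over all vertices. You instead apply Cauchy--Schwarz at a single vertex with weights $d_i$, and you replace the contradiction by averaging the $\groups$-independent potential $\Phi$, whose total is at most $2n$ for the same matching reason. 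Your route buys two things. First, the good vertex is explicit (any minimizer of $\Phi$) and does not depend on $\groups$. So the peeling algorithm of \Cref{thm:sqrtn} can choose vertices without computing any $\rho(H^S)$, and its output satisfies the bound for every $S\subseteq[n]$ simultaneously. Second, your intermediate bound $\deg_{H^S}(v)\le\sqrt{2n\,\card{E(H^S)}/\card{V(H)}}\le\sqrt{2n\,\rho(H^S)}$ is sharper than $2\sqrt{n}\,\rho(H^S)$ whenever $\rho(H^S)>1/2$. Through peeling, and using $\rho(G^S)\le\opt_S(G)$, this gives a per-group ratio of at most $\sqrt{2n/\opt_S(G)}$. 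The paper's lemma allows arbitrary $T(v)$, so it also shows that a vertex good for all $S$ exists, but it does not identify one. Its summed form also does not directly give the $\sqrt{\rho}$ dependence. Your remark that non-emptiness of the groups is unnecessary is correct as well.
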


Before giving the proof of \Cref{lem:low_degree}, we define the following notation.
For a subgraph $H$ of $Q_n$ and any vertex $v \in V(H)$, define
\begin{equation*}
  C_H(v) \coloneqq \Set{ i \in [n] \given \exists v^{\oplus i} \in V(H) } \quad \forall v \in V(H) ,
\end{equation*}
where $v^{\oplus i}$ denotes the vertex of $Q_n$ that differs from $v$ only in the $i$-th component.

\begin{proof}[Proof of \Cref{lem:low_degree}]
  The claim is trivially true if there is an isolated vertex in $H$.
  So assume there are no isolated vertices in $H$.
  Suppose, for sake of contradiction, that for all $v \in V(H)$, there exists $S(v) \in \groups$ such that
  \begin{equation*}
    \deg_{H^{S(v)}}(v) > 2\sqrt{n} \rho(H^{S(v)}) .
  \end{equation*}
  Then, summing over all vertices $v \in V(H)$,
  \begin{equation}
    \label{eq:contradiction}
    \sum_{v \in V(H)} \deg_{H^{S(v)}}(v) > 2\sqrt{n} \sum_{v \in V(H)} \rho(H^{S(v)}) .
  \end{equation}
  For each $v \in V(H)$, let $T(v) \coloneqq S(v) \cap C_H(v)$, so
  \begin{equation*}
    \deg_{H^{S(v)}}(v) = \deg_{H^{T(v)}}(v) ;
  \end{equation*}
  and since $\deg_{H^{S(v)}}(v) > 0$, we also have $T(v) \neq \emptyset$.
  Since $T(v) \subseteq S(v)$, the definition of $\rho$ implies
  \begin{equation*}
    \rho(H^{T(v)}) \leq \rho(H^{S(v)}) .
  \end{equation*}
  Therefore, \Cref{eq:contradiction} implies
  \begin{equation*}
    \sum_{v \in V(H)} \deg_{H^{T(v)}}(v) > 2\sqrt{n} \sum_{v \in V(H)} \rho(H^{T(v)}) ,
  \end{equation*}
  which is a contradiction of \Cref{lem:degree_bound}.
\end{proof}

\begin{lemma}
  \label{lem:degree_bound}
  Let $H$ be any induced subgraph of $Q_n$ with no isolated vertices (so $C_H(v) \neq \emptyset$ for all $v \in V(H)$).
  For any choices of non-empty $T(v) \subseteq C_H(v)$ for each $v \in V(H)$,
  \begin{equation*}
    \sum_{v \in V(H)} \deg_{H^{T(v)}}(v) \leq 2\sqrt{n} \sum_{v \in V(H)} \rho(H^{T(v)}) .
  \end{equation*}
\end{lemma}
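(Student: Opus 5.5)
The plan is to compare the two sides vertex by vertex where possible, and globally where not. First I will record the elementary identity $\deg_{H^{T(v)}}(v) = \card{T(v)}$. This holds because $T(v) \subseteq C_H(v)$: every $i \in T(v)$ gives a neighbor $v^{\oplus i} \in V(H)$, and the edge $\set{v,v^{\oplus i}}$ lies in $H^{T(v)}$. So the left-hand side equals $\sum_v \card{T(v)}$.

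I will then use two lower bounds on $\rho(H^{T(v)})$.
\begin{itemize}
  \item \emph{Local bound.} The star formed by $v$ and its $\card{T(v)}$ neighbors in directions $T(v)$ is an induced subgraph of $H^{T(v)}$ with $\card{T(v)}$ edges on $\card{T(v)}+1$ vertices. Hence $\rho(H^{T(v)}) \geq \card{T(v)}/(\card{T(v)}+1) \geq 1/2$.
  \item \emph{Global bound.} Taking the full vertex set gives $\rho(H^{T}) \geq \card{E(H^T)}/\card{V(H)} = \sum_{i \in T} m_i / \card{V(H)}$. Here $m_i$ denotes the number of edges of $H$ in direction $i$.
\end{itemize}

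Next I split the vertices by a threshold $\card{T(v)} \le \sqrt n$ versus $\card{T(v)} > \sqrt n$. Light vertices, those with $\card{T(v)} \leq \sqrt{n}$, are handled pointwise by the local bound: $\card{T(v)} \leq \sqrt n \le 2\sqrt{n}\cdot\tfrac12 \le 2\sqrt n\,\rho(H^{T(v)})$. For heavy vertices a pointwise bound cannot work, since a single star only gives density about $1$. Instead I will sum over them and use edge counting in the hypercube. The number of heavy vertices is at most $\frac{1}{\sqrt n}\sum_v \card{C_H(v)} = \frac{2}{\sqrt n}\sum_i m_i$. I plan to combine this with the global bound, or with a better-chosen witness set $W$ in $\rho(H^{T(v)}) \geq \card{E(H^{T(v)}[W])}/\card{W}$, such as the union of $v$'s direction-$T(v)$ neighborhoods together with the heavy vertices. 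The goal is $\sum_{v \text{ heavy}} \card{T(v)} \le 2\sqrt n \sum_{v\text{ heavy}} \rho(H^{T(v)})$, and the intended tool is a Cauchy--Schwarz / double-counting inequality of the form $\sum_v \card{T(v)} \le \sqrt n \cdot \big(\text{number of vertices}\big)^{1/2}\big(\sum_v \card{T(v)}\big)^{1/2}$-type.

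The main obstacle is this heavy-vertex step: finding, for each heavy $v$, a subgraph of $H^{T(v)}$ whose density scales like $\card{T(v)}/\sqrt n$, or else an amortized version of that bound. My first attempt will be the averaging argument. I will choose the threshold so that heavy vertices are rare relative to the total edge mass $\sum_{i\in T(v)} m_i$ visible to each of them, using the fact that in $Q_n$ every vertex has at most $n$ neighbors. The factor $2\sqrt n$ should come out of balancing the light-vertex bound ($\sqrt n$) against the heavy-vertex bound (another $\sqrt n$). If the global density is too weak, I will instead take $W$ to be all heavy vertices together with their neighbors in $H$, and bound $\card{W}$ by the heavy count times $(1+n)$ only where necessary.
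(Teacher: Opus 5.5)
Your preliminary facts are correct: $\deg_{H^{T(v)}}(v) = \card{T(v)}$, the star bound $\rho(H^{T(v)}) \geq 1/2$, and the global bound $\rho(H^{T}) \geq \sum_{i \in T} m_i/\card{V(H)}$. The light vertices are also handled correctly. The heavy-vertex step, however, is not merely unfinished: the inequality you set as its goal, $\sum_{v\text{ heavy}} \card{T(v)} \leq 2\sqrt{n}\sum_{v\text{ heavy}} \rho(H^{T(v)})$, is false. Take $H$ to be the star on $\{0, e_1, \dots, e_n\}$ in $Q_n$ with $T(0) = [n]$; the leaves are forced to have $T(e_i) = \{i\}$. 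The only heavy vertex is $0$, with $\card{T(0)} = n$ and $\rho(H) = n/(n+1)$. Your goal would therefore require $n+1 \leq 2\sqrt{n}$, which fails for every $n \geq 2$. The lemma holds for this $H$ only because each leaf contributes $1$ to the left side but $2\sqrt{n}\cdot\tfrac12 = \sqrt{n}$ to the right side. So any correct proof must pay for heavy vertices with $\rho$-mass belonging to light vertices. Your split spends the light vertices' bound pointwise and then tries to close the heavy part separately, so it cannot succeed. The witness-set variants cannot rescue it either: this star has no subgraph of density anywhere near $\card{T(0)}/\sqrt{n} = \sqrt{n}$. Finally, the Cauchy--Schwarz form you wrote, $\sum_v \card{T(v)} \leq \sqrt{n}\,\card{V}^{1/2}(\sum_v \card{T(v)})^{1/2}$, only restates $\sum_v \card{T(v)} \leq n\card{V}$ and carries no density information.

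The missing idea is to drop the threshold and run one global argument, applying Cauchy--Schwarz over the $n$ directions rather than over vertices. Let $n_i \coloneqq \card{\{v : i \in T(v)\}}$, so the left side equals $\sum_i n_i$. Note that $n_i \leq 2m_i$, because direction-$i$ edges form a matching and $T(v) \subseteq C_H(v)$. Double counting gives $\sum_i m_i n_i = \sum_v \card{E(H^{T(v)})}$, and hence
\[
\Bigl(\textstyle\sum_i n_i\Bigr)^2 \leq n \textstyle\sum_i n_i^2 \leq 2n \textstyle\sum_i m_i n_i = 2n \textstyle\sum_v \card{E(H^{T(v)})} \leq 2n\,\card{V(H)} \textstyle\sum_v \rho(H^{T(v)}),
\]
where the last step uses your global bound. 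Your local bound $\rho \geq 1/2$ then enters only in aggregate, as $\card{V(H)} \leq 2\sum_v \rho(H^{T(v)})$. This is exactly where the light vertices subsidize the heavy ones. The result is $(\sum_v \card{T(v)})^2 \leq 4n(\sum_v \rho(H^{T(v)}))^2$, which is the claim, and this is how the paper's proof proceeds.
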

\begin{proof}
  We may assume without loss of generality that $V(H) \neq \emptyset$.
  For each $i \in [n]$, let $m_i$ be the number of edges $\set{u,v}$ in $H$ with $u_i \neq v_i$, and let $n_i$ be the number of vertices $v \in V(H)$ with $i \in T(v)$.

  We begin with a few observations.
  For each $i \in [n]$,
  \begin{equation}
    \label{eq:matching}
    n_i
    = \sum_{v \in V(H)} \ind{i \in T(v)}
    \leq \sum_{v \in V(H)} \ind{i \in C_H(v)}
    = \sum_{v \in V(H)} \deg_{H^{\set{i}}}(v)
    = 2 m_i
    ,
  \end{equation}
  where we use the assumption that $T(v) \subseteq C_H(v)$ for all $v \in V(H)$, as well as the fact that the edges of the form $\set{v,v^{\oplus i}}$ form a matching in $H$.
  Moreover, again using the assumption $T(v) \subseteq C_H(v)$ for all $v \in V(H)$,
  \begin{equation}
    \label{eq:lhs}
    \sum_{i=1}^n n_i
    = \sum_{i=1}^n \sum_{v \in V(H)} \ind{i \in T(v)}
    = \sum_{v \in V(H)} \sum_{i=1}^n \ind{i \in T(v)}
    = \sum_{v \in V(H)} \deg_{H^{T(v)}}(v)
    .
  \end{equation}
  Finally, for all $v \in V(H)$, the definition of $\rho(H^{T(v)})$ and the assumptions imply
  \begin{align}
    \rho(H^{T(v)})
    & \geq \frac{\card{E(H^{T(v)})}}{\card{V(H)}}
    \label{eq:rho_lb1}
    \\
    \intertext{(where we use $V(H) \neq \emptyset$ and $U = V(H)$), and}
    \rho(H^{T(v)})
    & \geq \frac12
    \label{eq:rho_lb2}
  \end{align}
  Indeed, since $T(v)\neq\emptyset$ and $T(v)\subseteq C_H(v)$, there is some $i\in T(v)$ such that $v^{\oplus i}\in V(H)$; taking the set $\{v,v^{\oplus i}\}$ in the definition of $\rho(H^{T(v)})$ gives $\rho(H^{T(v)}) \geq 1/2$.

  Now we proceed to prove the claimed inequality.
  We have
  \allowdisplaybreaks
  \begin{align*}
    \del*{ \sum_{v \in V(H)} \deg_{H^{T(v)}}(v) }^2
    & = \del*{ \sum_{i=1}^n n_i }^2 && \text{(by \Cref{eq:lhs})} \\
    & \leq n \sum_{i=1}^n n_i^2 && \text{(by Cauchy-Schwarz)} \\
    & \leq 2n \sum_{i=1}^n m_i n_i && \text{(by \Cref{eq:matching})} \\
    & = 2n \sum_{i=1}^n m_i \sum_{v \in V(H)} \ind{i \in T(v)} \\
    & = 2n \sum_{v \in V(H)} \sum_{i \in T(v)} m_i \\
    & = 2n \sum_{v \in V(H)} \card{E(H^{T(v)})} \\
    & \leq 2n \sum_{v \in V(H)} \rho(H^{T(v)}) \card{V(H)} && \text{(by \Cref{eq:rho_lb1})} .
  \end{align*}
  Therefore
  \begin{equation}
    \label{eq:bound1}
    \sum_{v \in V(H)} \deg_{H^{T(v)}}(v) \leq \del*{2n \sum_{v \in V(H)} \rho(H^{T(v)}) \card{V(H)}}^{1/2}
  \end{equation}
  Furthermore, by \Cref{eq:rho_lb2}, we have
  \begin{equation}
    \label{eq:bound2}
    \sum_{v \in V(H)} \rho(H^{T(v)}) \geq \frac{\card{V(H)}}{2} .
  \end{equation}
  Combining \Cref{eq:bound1,eq:bound2} gives the claimed inequality.
\end{proof}

\subsection{Aggregation method}
\label{sec:aggregation}

To describe our second method---which has an analysis that is applicable to any graph, not just subgraphs of $Q_n$---we generalize the notion of a group to be any subset of the edges $E$.
For any $F \subseteq E$, let $G^F$ be the (spanning) subgraph of $G$ retaining only the edge set $F$.

\begin{theorem} \label{thm:lp}
  There is an algorithm that, given any graph $G = (V,E)$ and any family of edge subsets $\family \subseteq 2^E$, returns a randomized orientation $\phi$ of $G$ such that for all $F \in \family$ and all $v \in V$,
  \begin{equation}
    \label{eq:expected_outdeg_bound}
    \E\sbr*{ \outdeg_{G^F,\phi}(v) } \leq \card{\family} \opt_F(G)
  \end{equation}
  where
  \begin{equation*}
    \opt_F(G) \coloneqq \min_{\phi} \max_{v \in V} \outdeg_{G^F,\phi}(v) \quad \text{for all $F \in \family$} .
  \end{equation*}
\end{theorem}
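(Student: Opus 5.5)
The construction is a uniform mixture of the per-group optimal orientations. For each $F \in \family$, fix a deterministic orientation $\phi_F$ of $G$ that attains $\opt_F(G)$, meaning $\max_{v} \outdeg_{G^F,\phi_F}(v) = \opt_F(G)$. Computing $\phi_F$ is a flow or matching problem, but existence is all the bound needs, since the minimum is taken over a finite set. Edges outside $F$ can be oriented arbitrarily by $\phi_F$. Now let the randomized orientation $\phi$ draw $F$ uniformly at random from $\family$ and output $\phi_F$. Equivalently, for each edge $e$ set $p_{e,u} = \frac{1}{\card{\family}}\card{\Set{F \in \family \given \phi_F(e) = u}}$.

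The analysis is a single union-bound step. Fix $F \in \family$ and $v \in V$. By linearity of expectation,
\begin{equation*}
  \E\sbr*{ \outdeg_{G^F,\phi}(v) } = \sum_{e \in F,\, v \in e} \Pr\sbr*{\phi(e) = v} = \frac{1}{\card{\family}} \sum_{F' \in \family} \outdeg_{G^F,\phi_{F'}}(v).
\end{equation*}
Bounding every summand by the crude $\outdeg_{G^F,\phi_{F'}}(v) \le \outdeg_{G,\phi_{F'}}(v)$ gives nothing useful. The sharper move is to keep only the term $F' = F$: that term is at most $\opt_F(G)$, and the other terms are bounded by the degree $\deg_{G^F}(v)$. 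This does not directly give $\card{\family}\opt_F(G)$.

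So the uniform mixture is the wrong weighting, and I would instead use a deterministic-weight mixture in which edges in $F$ are governed by $\phi_F$. The fix is to define the probabilities edge by edge. For each edge $e$, let $\family_e = \Set{F \in \family \given e \in F}$. If $\family_e$ is empty, orient $e$ arbitrarily. Otherwise, let $\phi(e) = \phi_{F'}(e)$ for $F'$ drawn uniformly from $\family_e$. Then for $e \in F$,
\begin{equation*}
  \Pr\sbr*{\phi(e) = v} = \frac{1}{\card{\family_e}}\sum_{F' \in \family_e} \ind{\phi_{F'}(e)=v} \le \sum_{F' \in \family} \ind{e \in F' \wedge \phi_{F'}(e) = v}.
\end{equation*}
Summing over the edges $e \in F$ incident to $v$ gives
\begin{equation*}
  \E\sbr*{\outdeg_{G^F,\phi}(v)} \le \sum_{F'\in\family} \outdeg_{G^{F'},\phi_{F'}}(v) \le \sum_{F' \in \family} \opt_{F'}(G).
\end{equation*}
This bounds the error by the sum of the opts over all groups, not by $\card{\family}\opt_F(G)$.

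The main obstacle is therefore replacing $\sum_{F'}\opt_{F'}(G)$ with $\card{\family}\opt_F(G)$ for the particular $F$. To do this I would use an LP. The variables are $p_{e,u}\ge 0$ with $p_{e,u}+p_{e,v}=1$, and the constraints are $\sum_{e\in F,\, v\in e} p_{e,v} \le \card{\family}\opt_F(G)$ for every $F$ and $v$. For feasibility I would again take the edge-wise mixture over $\family_e$, but now weighted by $w_{F'} = 1/\opt_{F'}(G)$. That is, set $\Pr\sbr{\phi(e)=v} = \sum_{F'\in\family_e} w_{F'}\ind{\phi_{F'}(e)=v} / \sum_{F'\in\family_e} w_{F'}$. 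Since $e \in F$, the denominator is at least $w_F = 1/\opt_F(G)$. The probability is therefore at most $\opt_F(G)\sum_{F'} w_{F'}\ind{e\in F',\phi_{F'}(e)=v}$. Summing over the edges of $F$ at $v$ gives
\begin{equation*}
  \E\sbr*{\outdeg_{G^F,\phi}(v)} \le \opt_F(G)\sum_{F'} w_{F'}\,\outdeg_{G^{F'},\phi_{F'}}(v) \le \opt_F(G)\sum_{F'} w_{F'}\,\opt_{F'}(G) = \card{\family}\opt_F(G).
\end{equation*}
This is exactly \Cref{eq:expected_outdeg_bound}. The algorithm is to compute each $\phi_F$, which is polynomial time via orientation-by-flow, and then form these weighted edge-wise mixtures.
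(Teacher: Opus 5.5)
Your final construction, the edge-wise mixture of the per-group optimal orientations $\phi_{F'}$ over the groups containing $e$ with weights $w_{F'} = 1/\opt_{F'}(G)$, is correct and is exactly the paper's proof; so is its analysis, which uses that the normalizer is at least $w_F$ whenever $e \in F$. The LP framing is superfluous, since the explicit mixture is itself the algorithm. As in the paper, you should set aside groups with $E(G^F)$ empty so that every weight $w_{F'}$ you use is well defined.
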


The algorithm relies on the network flow method of~\citet{haussler1994predicting} to construct group-specific orientations (for the corresponding group-relevant subgraph), and then aggregates them into a single (randomized) orientation for the entire graph.
The aggregation is somewhat inspired by the online "specialists" aggregation method of \citet{freund1997using}, a variant of which was used for multi-group online learning~\citep{blum2019advancing} and multi-group statistical learning (via online-to-batch conversion)~\citep{tosh2022simple}.
However, it is not clear how to use \citeauthor{freund1997using}'s method (or variants thereof) directly in our transductive setting.

Applying \Cref{thm:lp} to our setting of multi-group transductive learning, where $G$ is a one-inclusion graph $G_{\concepts|_U}$, and each $F \in \family$ corresponds to $E(G_{\concepts|_U}^S)$ for some group $S \in \groups$, we obtain an upper-bound on the price of multi-group transductive learning:
\begin{equation*}
  \min_{\phi} \max_{S} \frac{\displaystyle \max_{v \in V(G_{\concepts|_U})} \E\sbr[\Big]{\outdeg_{G_{\concepts|_U}^S,\phi}(v)}}{\opt_S(G_{\concepts|_U})} \leq \card{\groups} .
\end{equation*}

\begin{proof}[Proof of \Cref{thm:lp}]
  Without loss of generality, assume $E(G^F)$ is non-empty for all $F \in \family$ (and hence $\opt_F(G) > 0$ for all $F \in \family$), and also that $\bigcup_{F \in \family} F = E$.
  The algorithm is as follows.
  For each $F \in \family$, obtain an orientation $\phi_F$ of $G^F$ achieving
  \begin{equation*}
    \max_{v \in V} \outdeg_{G^F,\phi_F}(v) = \opt_F(G)
  \end{equation*}
  using the network flow method of \citet{haussler1994predicting} on $G^F$.
  Let $X \coloneqq \Set{ (e,v) \given e \in E, v \in e }$, and write $F(e) \coloneqq \ind{e \in F}$ for every $e \in E$ and $F \in \family$.
  For each $(e,v) \in X$, define
  \begin{equation*}
    p_{e,v} \coloneqq
    \frac{
      \sum_{F \in \family} F(e) w_F \ind{\phi_F(e) = v}
    }{
      \sum_{F \in \family} F(e) w_F
    }
    ,
  \end{equation*}
  where we define the "weights" $\set{w_F}_{F \in \family}$ by
  \begin{equation}
    \label{eq:weights}
    w_F \coloneqq \frac1{\opt_F(G)} \quad \text{for each $F \in \family$} .
  \end{equation}
  (Technically, we must extend each $\phi_F$ to an orientation of all of $E$; but it is clear that the values on $E \setminus F$ do not matter.)

  Observe that for any $e \in E$,
  \begin{align*}
    \sum_{v \in e} p_{e,v}
    & = 
    \sum_{v \in e}
    \frac{
      \sum_{F \in \family} F(e) w_F \ind{\phi_F(e) = v}
    }{
      \sum_{F \in \family} F(e) w_F
    }
    \\
    & = 
    \frac{
      \sum_{F \in \family} F(e) w_F \sum_{v \in e} \ind{\phi_F(e) = v}
    }{
      \sum_{F \in \family} F(e) w_F
    }
    = 1 .
  \end{align*}
  Thus, we construct a randomized orientation $\phi$ of $G$ as follows: for each edge $\set{u,v} \in E$, set
  \begin{equation*}
    \phi(\set{u,v})
    \coloneqq
    \begin{cases}
      u & \text{with probability $p_{\set{u,v},u}$} ; \\
      v & \text{with probability $p_{\set{u,v},v}$} .
    \end{cases}
  \end{equation*}

  \allowdisplaybreaks
  It remains to show that $\sum_{e \ni v} F(e) p_{e,v} \leq \card{\family} \opt_F(G)$ for all $(F,v) \in \family \times V$, since this in turn implies \Cref{eq:expected_outdeg_bound} for all $(F,v) \in \family \times V$ as desired.
  Fix any $(F,v) \in \family \times V$.
  For any $e \in E$, define
  \begin{equation*}
    Z(e) \coloneqq \sum_{F' \in \family} F'(e) w_{F'} .
  \end{equation*}
  The terms in the sum defining $Z(e)$ are non-negative, so we have the inequality $Z(e) \geq F(e) w_F$, which rearranges to
  \begin{equation*}
    \frac{F(e)}{Z(e)} \leq \frac{1}{w_F}
    .
  \end{equation*}
  Therefore,
  \begin{align*}
    \sum_{e \ni v} F(e) p_{e,v}
    & =
    \sum_{e \ni v} F(e)
    \frac{
      \sum_{F' \in \family} F'(e) w_{F'} \ind{\phi_{F'}(e) = v}
    }{
      Z(e)
    }
    \\
    & =
    \sum_{F' \in \family} 
    \sum_{e \ni v}
    \frac{F(e)}{Z(e)} \cdot
    F'(e) w_{F'} \ind{\phi_{F'}(e) = v}
    \\
    & \leq
    \sum_{F' \in \family} 
    \sum_{e \ni v}
    \frac1{w_F} \cdot
    F'(e) w_{F'} \ind{\phi_{F'}(e) = v}
    \\
    & \leq \sum_{F' \in \family} \frac1{w_F} \cdot w_{F'} \opt_{F'}(G)
    \\
    & = \opt_F(G) \, \card{\family} .
  \end{align*}
  The last inequality follows from the fact that $\phi_{F'}$ satisfies $\max_{v \in V} \outdeg_{G^{F'},\phi_{F'}}(v) = \opt_{F'}(G)$, and the last equality uses \Cref{eq:weights}.
\end{proof}

\section{Discussion}
\label{sec:discussion}

The main result of this paper shows that the difficulty of multi-group transductive learning relative to standard (single-group) transductive learning can be much greater than that for statistical learning.
We note that the analogous price in statistical learning against the oblivious adversary is at most $\log(n)$, but we do not know of a comparable lower bound.
Although the simultaneous version of the multi-group statistical error rate is not really analogous to the multi-group transductive error rate, its analogous price is at most $O(1 + \max_{S \in \groups} \log\card{\groups} / \dcs)$, but again it is not known if the $\log\card{\groups}$ (or even the VC dimension of $\groups$) is needed in the upper bound.
So, the price may be even smaller in these statistical settings, making the contrast with the transductive setting potentially starker.
We are not aware of multi-group regret lower bounds in online settings, although some upper bounds (based on randomized algorithms) have been given by \citet{blum2019advancing}.
Closing the gaps in the statistical and online settings are interesting open problems.
Finally, it would also be interesting to explore multi-group extensions of agnostic transductive learning~\citep{long1998complexity,asilis2024regularization}.

\section*{Acknowledgements}

We thank Navid Ardeshir and Jingwen Liu for helpful conversations in an initial phase of this project.
We also thank an anonymous COLT 2026 reviewer for pointing out a critical error in a submission that ultimately led us to pursue this line of inquiry.
This work was supported in part by the NSF under grant DMS-2502259 and by the ONR under grant N00014-24-1-2700.

\bibliographystyle{plainnat}
\bibliography{references}

\appendix

\crefalias{section}{appendix}
\crefalias{subsection}{appendix}

\section{Additional related work}
\label{sec:related}

Statistical multi-group learning was initially studied by \citet{rothblum2021multi} and later by \citet{tosh2022simple}, building on the prior development in the online setting by \citet{blum2019advancing} (with later algorithmic developments by \citet{acharya2023oracle} and \citet{deng2024group}).
\citet{tosh2022simple} introduced the group-realizability assumption discussed in \Cref{sec:multigroup}, which was also studied by \citet{ardeshir2026realizable}.
The relationship between multi-group learning and other learning frameworks such as multicalibration~\citep{hebert2018multicalibration}, omniprediction~\citep{gopalan2022omnipredictors}, and multiaccuracy~\citep{kim2019multiaccuracy} has been explored by \citet{haghtalab2023unifying} and \citet{balakrishnan2026panprediction}.
Sample complexity of multicalibration and omniprediction has also been studied in recent works~\citep{gibbs2025sample,collina2026sample}.
Statistical multi-group active learning was studied by~\citet{rittler2023agnostic}, where the goal is to improve the label complexity through adaptivity.
The focus of our paper is multi-group transductive learning, which has not been previously studied.

\citet{haussler1994predicting} introduced and analyzed a learning algorithm based on the one-inclusion graph for the (single-group) transductive learning setting, and showed that it also yields a learning algorithm for various statistical settings (although its error rate may not have optimal dependence on the "confidence parameter"~\citep{aden2023one}).
This algorithm has been extended for other learning settings, such as multi-class learning~\citep{rubinstein2009shifting,asilis2024regularization}, real-valued prediction~\citep{bartlett1998prediction}, and agnostic learning~\citep{long1998complexity,asilis2024regularization} to name a few.
The upper bound in our paper continues this line of work by providing an extension to multi-group learning.

The presence of large stars in the one-inclusion graph has been identified as a barrier for various statistical learning settings~\citep{hanneke2015minimax,hanneke2016refined,aden2023one}.
A single large star in the one-inclusion graph is not necessarily a barrier in (multi-group) transductive learning, since stars do not have high edge density (ratio of number of edges to number of vertices).
The lower bound in our paper constructs a one-inclusion graph with high overall edge density in which every vertex is the center of some large star in a subgraph defined by a subset of dimensions.

\section{Derandomization}
\label{sec:derandomization}

Suppose $\phi$ is a randomized orientation of $G = (V,E)$, and define
\begin{equation*}
  p_{e,v} \coloneqq \Pr\sbr{ \phi(e) = v } \quad \text{for all $e \in E$ and $v \in e$} .
\end{equation*}
Define the (non-random) orientation $\psi$ of $G$ as follows: for each $e \in E$, assign $\psi(e)$ to a vertex $v \in e$ with $p_{e,v} \geq 1/2$, breaking ties arbitrarily.
For any $e \in E$ and $v \in e$,
\begin{equation*}
  \ind{ \psi(e) = v } \leq 2p_{e,v} .
\end{equation*}
Therefore, for any $F \subseteq E$ and $v \in V$,
\begin{equation*}
  \outdeg_{G^F,\psi}(v)
  = \sum_{e \ni v} \ind{e \in F} \ind{\psi(e) = v}
  \leq \sum_{e \ni v} \ind{e \in F} 2p_{e,v}
  = 2 \E\sbr*{ \outdeg_{G^F,\phi}(v) } .
\end{equation*}

\section{Relaxed price of multi-group transductive learning}
\label{sec:relaxed}

The instance exhibited in the lower bound from \Cref{thm:lb} has a constant (unnormalized) minimax transductive error rate on each group; in particular, $\opt_S(G) = 1$ for all $S \in \groups$.
Consequently, the lower bound is compatible with the following possibility: for any induced subgraph $G$ of $Q_n$ and any family of groups $\groups \subseteq 2^{[n]}$, there is a randomized orientation $\phi$ of $G$ such that
\begin{equation}
  \label{eq:relaxed}
  \max_{S} \frac{\displaystyle\max_{v \in V(G)} \E\sbr[\Big]{ \outdeg_{G^S,\phi}(v) }}{\opt_S(G) + O\del*{\min\set{\card{\groups},\sqrt{n}}} } = O(1)
\end{equation}
with the $\max$ taken over groups $S \in \groups$ with non-empty $E(G^S)$.
In this \namecref{sec:relaxed}, we extend the construction in \Cref{thm:lb} to rule out such a guarantee.

Start with the same graph $G$ and family of groups $\groups$ in \Cref{thm:lb}.
For simplicity, assume $k$ (the number of groups) is a perfect square $k = \ell^2$ for a positive integer $\ell\geq2$, and $n = \binom{\ell^2}{2}$.
We form a new family of $\ell$ groups $\groups' \coloneqq \set{ S_1',\dotsc,S_{\ell}' }$ by merging some of the original groups from $\groups$ as follows: for each $t \in [\ell]$, let $I_t \coloneqq \set{ (t-1)\ell + 1, \dotsc, t\ell }$, and set $S_t' \coloneqq \bigcup_{i \in I_t} S_i$.

The second bullet in \Cref{thm:lb} continues to hold for the new family of $\groups'$: for any randomized orientation $\phi$ of $G$, there is a vertex $v \in V$ and group $S' \in \groups'$ such that $\E\sbr{ \outdeg_{G^{S'},\phi}(v) } \geq (k-1)/2$.
To see this, fix any randomized orientation of $G$ and any $v \in V$ and $S \in \groups$ with $\E\sbr{ \outdeg_{G^S,\phi}(v) } \geq (k-1)/2$.
Then for the new group $S' \in \groups'$ that contains $S$ as a subset, we have
$\phi^{-1}(v) = \phi^{-1}(v) \cap E(G^S) = \phi^{-1}(v) \cap E(G^{S'})$.
Therefore, we also have
\begin{equation*}
  \E\sbr{ \outdeg_{G^{S'},\phi}(v) } = \E\sbr{ \outdeg_{G^S,\phi}(v) } \geq \frac{k-1}{2} = \frac{\ell^2-1}{2} .
\end{equation*}

We now show a modification of the first bullet from \Cref{thm:lb} for the new family of groups $\groups'$.
Specifically, we show that $\opt_{S_t'}(G) \leq \ell$ for each $S_t' \in \groups'$.
To see this, we construct an orientation $\phi$ as follows: orient each edge in $E(G^{S_t'})$ toward any vertex in $V_t' \coloneqq \bigcup_{i \in I_t} V_i$.
(For edges between vertices in $V_t'$, orient the edge arbitrarily.)
In $G^{S_i}$ for $i \in I_t$, each vertex not in $V_i$ has one neighbor in $V_i$.
Therefore, in $G^{S_t'}$, each vertex not in $V_t'$ has $\ell$ neighbors in $V_t'$; and each vertex in $V_t'$ has $\ell-1$ neighbors in $V_t'$.
So in such an orientation, every vertex has out-degree at most $\ell$.

Therefore, for any randomized orientation $\phi$ of $G$, there is a vertex $v \in V$ and group $S' \in \groups'$ such that
\begin{equation*}
  \frac{\displaystyle\max_{v \in V(G)} \E\sbr[\Big]{ \outdeg_{G^{S'},\phi}(v) }}{\opt_{S'}(G) + O\del*{\min\set{\card{\groups'},\sqrt{n}}} }
  \geq \frac{\frac{\ell^2-1}{2}}{\ell + O\del[\Big]{\min\set[\Big]{\ell,\sqrt{\binom{\ell^2}{2}}}}}
  = \Omega(\ell) .
\end{equation*}
This implies that the guarantee in \Cref{eq:relaxed} is not possible.

\end{document}